\newtheorem*{rep@theorem}{\rep@title}
\newcommand{\newreptheorem}[2]{%
\newenvironment{rep#1}[1]{%
 \def\rep@title{#2 \ref{##1}}%
 \begin{rep@theorem}}%
 {\end{rep@theorem}}}
\newcommand{\todo}[1]{}
\renewcommand{\todo}[1]{{\color{blue} TODO: {#1}}}
\newtheorem{definition}{Definition}
\def\eqref#1{equation~\ref{#1}}
\def\1{\bm{1}}
\DeclareMathAlphabet{\mathsfit}{\encodingdefault}{\sfdefault}{m}{sl}
\SetMathAlphabet{\mathsfit}{bold}{\encodingdefault}{\sfdefault}{bx}{n}
\newcommand{\E}{\mathbb{E}}
\DeclareMathOperator*{\argmin}{arg\,min}
\newcommand*{\B}[1]{\ifmmode\bm{#1}\else\textbf{#1}\fi}
\newtheorem{proposition}{Proposition}
\newcommand\blfootnote[1]{%
  \begingroup
  \renewcommand\thefootnote{}\footnote{#1}%
  \addtocounter{footnote}{-1}%
  \endgroup
}
\title{Learning Randomized Algorithms with Transformers}
\author{%
  Johannes von Oswald$^{*,1}$ \\
   \And
    Seijin Kobayashi$^{*, 1, 2}$ \\
  \And
    Yassir Akram$^{*, 2}$ \\
  \And
    Angelika Steger$^{2}$ 
}
\begin{document}
\maketitle
\vspace{-0.75cm}
\begin{center}
   {\normalfont $^1$ Google, Paradigms of Intelligence Team} 
    $^2$  ETH Zürich \\
    \{jvoswald, seijink\}@google.com, \{yakram, asteger\}@ethz.ch
\end{center}

\vspace{1cm}
\begin{abstract}
Randomization is a powerful tool that endows algorithms with remarkable properties. For instance, randomized algorithms excel in adversarial settings, often surpassing the worst-case performance of deterministic algorithms with large margins. Furthermore, their success probability can be amplified by simple strategies such as repetition and majority voting.
In this paper, we enhance deep neural networks, in particular transformer models, with randomization. We demonstrate for the first time that randomized algorithms can be instilled in transformers through learning, in a purely data- and objective-driven manner.
First, we analyze known adversarial objectives for which randomized algorithms offer a distinct advantage over deterministic ones. We then show that common optimization techniques, such as gradient descent or evolutionary strategies, can effectively learn transformer parameters that make use of the randomness provided to the model.
To illustrate the broad applicability of randomization in empowering neural networks, we study three conceptual tasks: associative recall, graph coloring, and agents that explore grid worlds. In addition to demonstrating increased robustness against oblivious adversaries through learned randomization, our experiments reveal remarkable performance improvements due to the inherently random nature of the neural networks' computation and predictions.
\end{abstract}

\section{Introduction}
\begin{figure}[htbp!]
\vspace{-0.5cm}
    \centering \includegraphics[width=1.\textwidth]{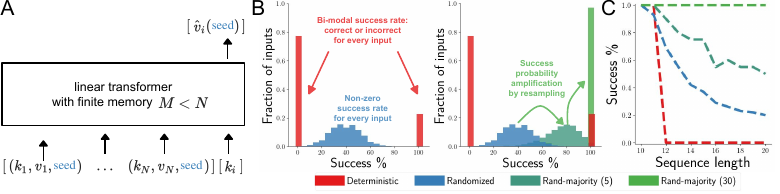} 
    \vspace{-0.25cm}
    \caption{\textbf{Solving associative recall tasks - the randomized way}. \textbf{A)}: A causal transformer model, with linear self-attention layers, is trained to remember $N$ value vectors each associated with a unique key. Given finite memory of size $M<N$, the model needs to decide which data to memorize in order to return the correct value when queried with some $k_i$. An algorithm that deterministically chooses what to remember will perform poorly as simple adversarial strategies will break retrieval. On the other hand, randomly deciding what to store protects the algorithm against such worst-case scenarios. To have a transformer learn such a randomized strategy we train it with an adversarial objective and , furthermore, diverge from usual machine learning setups and provide randomness, denoted as a \texttt{seed}, as an additional input to the model.
    \textbf{B)}: Histogram showing the fraction of possible inputs, all of length $N=20$, with varying recall success rates. \textit{Left}: We train two different models: 1) a deterministic one where we fix the input seed and 2) a randomized one with varying input seed. While the deterministic model consistently fails on some fraction of the inputs, the randomized model stores in memory varying parts of the sequence leveraging the provided seed, thus succeeding with non-zero probability on all possible recall tasks. 
    \textit{Right}: At inference time, majority voting can be used on the randomized model evaluated on several seeds, thus amplifying the success rate. Enough seeds leads to optimal recall.
    \textbf{C)}: Recall success rate for worst-case sequences of each model when trained on various input lengths. For larger sequence lengths, the deterministic model fails to recall on adversarial inputs and the worst case success rate thus drops to zero. On the other hand, randomized models have a non-zero success rate on all inputs. Strikingly, when amplifying the success probability with majority voting, the randomized model succeeds in recalling virtually all inputs.
    }
    \label{fig:assoc_recall}
    \vspace{-0.25cm}
\end{figure}

\blfootnote{ $^*$Equal contribution; the author order determined by coin flips.}Randomization is inherent to nature and an important ingredient in numerous scientific fields. In computer science, for example, randomness can be a powerful theoretical and practical tool to design algorithms \citep{books/daglib/0072413}.  However, understanding how, if, and when randomization can be beneficial for algorithms is neither obvious nor intuitive. Nevertheless, randomness is by now a well-established and widely used concept to design powerful and often strikingly simple algorithms \citep{MotwaniRaghavan1995, RABIN1980128}. 
In particular, randomization is known to be crucial to obtain algorithms performing well in game-theoretical adversarial settings, cf.\ rock-paper-scissors as a simple example. Here, randomization is essential as otherwise players will get exploited from a clever opponent. Within algorithmic design, furthermore, randomized algorithms are often surprisingly simple to implement. While base versions often have a non-satisfactory high failure probability, simple repetition and majority voting strategies typically allow to enhance overall success probability drastically at comparably low cost, see our illustrative example below and Appendix \ref{app:background} for a short background.
In this paper we combine the power of randomization and deep learning, in particular transformer models \cite{vaswani_attention_2017}, and show that powerful randomized algorithms within transformers are discovered when optimized on adversarial objectives. These transformer algorithms are significantly more robust against oblivious adversaries and dramatically outperform deterministic strategies through majority voting.

To set the stage, let's consider the example of associative recall to build up our intuition, illustrated in Figure \ref{fig:assoc_recall}.
Assume a simple computer system with memory size  $M\cdot d$ bits needs to save $N$ vectors, each of $d$ bits. These vectors are denoted as \textit{values} $v_i = [v_{i1}, \dots, v_{id}] \in \{0, 1\}^d$ which are associated with a unique one-hot binary \textit{key} $k_i = [k_{i1}, \dots,k_{iN}]$ with $k_{ij} = \mathbbm{1}_{i = j}$. The aim of the computer system is to retrieve the correct value vector if queried with some key $k_i \in \{k_1,\dots,k_N\}$ after observing the whole sequence of $N$ value vectors. Crucially, if the memory of the system is not large enough i.e. $ M < N $, the system needs to decide which data to store in memory and which data to disregard, assuming no compression is feasible.
This problem is a variant of the well-known paging problem in computer science \citep{paging} for which randomized solutions exist, see Appendix \ref{app:paging} for a short description. The problem is also a well-studied problem for transformers \citep{vaswani_attention_2017} and transformer architecture variants with the goal to assess retrieval capabilities \citep{schlag_linear_2021, arora_zoology_2023, jelassi2024repeat}.

Let us first assume that the computer system or transformer implements a deterministic strategy. For example, it always saves the first $M$ elements in the sequence until memory capacity is reached. Given this knowledge, there is a simple adversarial strategy that breaks retrieval: query the system with keys $k_j$ where $j > M$. Note that a similar retrieval failure can be observed in large-scale language models where this test is known as finding a \textit{needle in a haystack} \citep{liu2023lost}. Consider now that the transformer model is given an additional input, namely a random \texttt{seed}, and that based on the seed, it chooses uniformly at random which data to store. Then, every query will break retrieval (only) with probability  $1-\frac{M}{N}$. Therefore, in stark contrast to a deterministic system, there exists no input sequence on which the system will do specifically bad. In particular, no adversarial strategy can make the system fail consistently. In other words: randomization improves worst-case behavior, see Figure \ref{fig:assoc_recall} B \& C. Strikingly, randomization can enhance success rates of our trained transformers by repetition. Simply taking the majority vote among $m$ predictions computed on different seeds boosts performance to a perfect success rate far beyond the deterministic transformer counterpart - despite the same network capacity and training objective. 

\section{Theoretical Considerations}
We start by presenting some well-known theoretical results providing a more thorough description of when and why randomized algorithms can be beneficial compared to deterministic ones. This will lead us to define a simple training objective, closely related to robust optimization, that enforces the implementation of powerful randomized algorithms within deep neural networks like transformers\footnote{We take the liberty throughout this paper to use the term (transformer) algorithm and model interchangeably.} .
We then verify experimentally, in the following section, that this is indeed the case and show that transformer models do implement deep randomized algorithms in practice after training.

To this extent, we introduce some notation and definitions. We denote our model, in this paper a parametrized transformer, by $A_{\theta}(x, r)$ with input $x \in \mathcal{X}$ and some randomness (or seed) $r$ from a set $\mathcal{R}$. We will fix distributions over the input space $\mathcal{X}$ and the seeds $\mathcal{R}$ of the transformer model. Specifically, we define two random variables $X$ (over $\mathcal{X}$) and $R$ (over $\mathcal{R}$). For any parameter $\theta$, $x \mapsto A_{\theta}(x, R)$ is a randomized transformer model , which we also call randomized transformer algorithm. We provide the transformer models in the following with a \textit{random seed encoding} (RSE), similar to positional encodings common in transformer models. This encoding provides the randomness to the transformer and is usually a vector of noise, such as random bits, which we concatenate to the input tokens. Note that optimization can therefore ignore $R$, by setting, for example, the appropriate weights in the first layer to zero resulting in a deterministic transformer. When there exists $x \in \mathcal{X}$ such that the function $r \rightarrow A_{\theta}(x, r)$ is not constant, we say that our transformer is \textit{properly random}. For any fixed input $x\in\mathcal{X}$ and $r \in \mathcal{R}$, the loss is determined by $L(x, A_{\theta}(x, r)) \geqslant 0$. For notational simplicity, we will drop the dependence of the loss on $x$ in the following whenever it is clear from the context. The performance of the randomized transformer on the input $x$ is its expected loss, i.e. $\mathbb{E} [L(A_{\theta}(x,R))]$. We now review well-known results from the randomized algorithms literature and analyze when randomization is advantageous compared to determinism and vice versa. This analysis will lead to an optimization procedure from which we expect that the random source in the transformer's input is leveraged to reach lower loss values.

\subsection{Excessive model capacity will not  enforce randomness}

A first necessary condition for randomness being beneficial is that the model capacity doesn't allow to achieve optimal performance without using randomness i.e. ignoring it in the input. More precisely, if there exists $\theta, r\in\mathcal{R}$ that perfectly fits the data, i.e. such that $A_{\theta}(x, r) \in \arg\min_y L(x,y)$ for all $x \in X$, then randomness has no incentive to be leveraged. Whether the optimization algorithm will discover this $\theta$ is however not guaranteed. We now continue by showing that commonly used expected risk minimization or empirical risk, in practice, should not lead to randomization. 

\subsection{Randomization is not beneficial in expectation}

To see that randomization is not beneficial in expectation, and therefore unlikely to be leveraged by optimization to reach lower loss, we discuss Yao's Minimax Principle \citep{4567946}. This is a pivotal concept across various domains like game theory and randomized algorithms \cite{Neumann1928}. In the latter, it provides a framework to understand how randomized algorithms fare in adversarial environments. It can be articulated as follows, for any distribution $\mathcal{X}$ and $\mathcal{R}$:

\begin{equation}
  \min_{r \in \mathcal{R}} \mathbb{E}[L(A_{\theta}(X, r))] \leq \mathcal{L}^E(\theta) \coloneqq \mathbb{E}[L(A_{\theta}(X, R))] \leq \max_{x \in \mathcal{X}} \mathbb{E}[L(A_{\theta}(x, R))]  
\end{equation}

More intuitively, given a fixed distribution over our dataset, Yao's Minimax Principle shows: 
\begin{itemize}
     \item 1st inequality: There always exists a deterministic algorithm performing, in expectation over the data, at least as well as the randomized algorithm $A(\cdot, R)$. The process of extracting a particular seed that performs as well as the randomized algorithm is called derandomization. There is however no general efficient recipe for this task.
    \item 2nd inequality: Any randomized algorithm will inevitably perform worse on some input than the average performance of the best deterministic algorithm.
\end{itemize}

At first glance, this principle might seem to nullify any advantage of randomized algorithms over deterministic ones. However, the above principle applies to the setting where we assume a distribution over input data on which we want to perform well on expectation - in particular, when the objective is that of the empirical risk minimization (ERM) optimizing on a fixed dataset. Therefore, we recognize that the most common optimization strategy in deep learning is not expected to show benefits from randomization. In fact, we will show empirically that transformers that are trained on ERM \textit{do not} have random predictions when varying $r$, i.e. ERM does not lead to properly randomized transformers.

\subsection{Randomization can be beneficial in adversarial settings.} 

As discussed, randomness does not provide advantages when considering expected performance on an arbitrary distribution over our data. This picture changes drastically when considering adversarial settings, in particular min-max settings. To understand these settings, we first define a oblivious adversary:
\begin{definition}[Oblivious adversary]
    An oblivious adversary possesses complete knowledge of the agent's algorithm but lacks control over the randomness.
\end{definition}
In our context, such an adversary knows the algorithm $A$ and $R$ (the distribution of the seeds) but does not know the outcome of $R$ in advance. However, it can choose the distribution over the input and therefore restrict the input to the most "difficult" instances in $\mathcal{X}$. For example, in rock-paper-scissors it would constantly play paper, if it determines that the opponent deterministically chooses rock.     Unlike in other more game-theoretical settings, this adversary isn't a real opponent but rather a theoretical construct: the agent seeks in fact to ensure a favourable outcome irrespective of the input. This setting is closely linked to min-max strategies \cite{wald1945statistical}: an agent facing an oblivious adversary aims to optimize the min-max loss, defined as:
\begin{definition}[Min-max loss]
    The min-max objective is the minimization of the following loss:
    \begin{equation}
        \label{min_max_loss}
        \mathcal{L}^A(\theta) \coloneqq \max_{x\in\mathcal{X}} \E[L(A_{\theta}(x, R))]
    \end{equation}
\end{definition}

\begin{proposition}\label{prop:main}[Randomization can be beneficial in worst-case scenarios]  Assume that $\mathcal{X}$ is a compact set of $\mathbb{R}^d$ for some $d$, and that $L$ is continuous. Furthermore assume that there exist a parameter $\theta^*$ and a set of random seeds $(r_i)_{1\leq i \leq N} \in \mathcal{R}^N$ such that for each $i$
\begin{equation}
\max_{x\in\mathcal{X}} L(A_{\theta^*}(x, r_i)) = \min_{\theta, r}\max_{x\in\mathcal{X}} L(A_{\theta}(x, r)), \\
\text{ and }\\
 \bigcap_i \arg\max_{x'}  L(A_{\theta^*}(x', r_i))=\emptyset.
\end{equation}
Then there exists a randomized model with strictly smaller loss $\mathcal{L}^A$ than any deterministic model. 
\end{proposition}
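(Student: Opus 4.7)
My plan is to exhibit the randomized model directly and then evaluate its min-max objective. Write $L^* := \min_{\theta, r}\max_{x}L(A_\theta(x, r))$, so that by definition every deterministic model satisfies $\mathcal{L}^A \geq L^*$; it therefore suffices to construct a randomized model with $\mathcal{L}^A < L^*$. The obvious candidate is the model that fixes its parameters to $\theta^*$ and draws its effective seed uniformly from the finite set $\{r_1, \ldots, r_N\}$ supplied by the hypothesis (either reading "randomized model" in the classical sense of a mixture over deterministic ones, or assuming $R$ is expressive enough that the first layer of the transformer can realize this uniform distribution over $N$ atoms).

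The analysis I would then carry out is short. Compactness of $\mathcal{X}$ together with continuity of $L$ guarantees that for each seed $r_i$ the argmax set $S_i := \arg\max_x L(A_{\theta^*}(x, r_i))$ is nonempty, with maximum value equal to $L^*$ by the first hypothesis. The same compactness/continuity applied to the finite average selects some $x^* \in \mathcal{X}$ attaining
\begin{equation}
\mathcal{L}^A(\theta^*) \;=\; \max_{x\in\mathcal{X}}\frac{1}{N}\sum_{i=1}^{N}L(A_{\theta^*}(x, r_i)) \;=\; \frac{1}{N}\sum_{i=1}^{N}L(A_{\theta^*}(x^*, r_i)).
\end{equation}
Each term is bounded by $L^*$ from above, and the second hypothesis $\bigcap_i S_i = \emptyset$ forces $x^* \notin S_j$ for at least one index $j$. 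Since the maximum $L^*$ is attained on $S_j$, any point outside $S_j$ strictly undercuts it; that single strict inequality propagates through the sum, yielding $\mathcal{L}^A(\theta^*) < L^*$, which is the desired conclusion.

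The step that requires the most care is exactly this strict inequality: without attainment of the maximum (where compactness and continuity are used), $x^* \notin S_j$ would only give $L(A_{\theta^*}(x^*, r_j)) \leq L^*$, and the average could remain equal to $L^*$. Beyond this, the only conceptual subtlety I anticipate is making precise what counts as a "randomized model"; once we either interpret the proposition as allowing any seed distribution or posit sufficient expressivity of the transformer to map $R$ onto $\{r_1,\ldots,r_N\}$ uniformly, the remainder is a direct computation with no hidden difficulty.
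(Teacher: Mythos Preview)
Your proposal is correct and follows essentially the same approach as the paper: both take $R$ uniform on $\{r_1,\ldots,r_N\}$, use compactness and continuity to pass to a maximizer $x^*$ of the averaged loss, and then invoke the empty-intersection hypothesis to obtain a strict inequality in at least one summand. The paper phrases this last step as a proof by contradiction (assume the max of the average equals $M$, extract a convergent maximizing sequence, derive $x^*\in\bigcap_i S_i$), whereas you argue directly; your version is in fact slightly cleaner. Your remark about the meaning of ``randomized model'' is apt---the paper simply declares ``consider $R$ following a uniform distribution on the $(r_i)_i$'' without addressing realizability.
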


We provide the proof in Appendix \ref{app:proof}. Note that the above subsumes the more straightforward argument that no randomization is expected if the model can fit the data perfectly. Intuitively, the proposition shows that whenever there are several possible optimal functions for $\mathcal{L}^A$ that can be implemented within our model class, such that there is no input $x$ which is adversarial to all of them, then it is more beneficial to have a distribution over such functions than to deterministically encode a single one.

As argued by \citet{rice_robustness}, in the context of robust optimization it is often advantageous to relax the strict adversary of equation $\ref{min_max_loss}$ by optimizing the $q-$norm of the expected loss of the model distribution for $q>1$, i.e.
\begin{align}
\label{q_norm_loss}
   \min_{\theta} \mathcal{L}^q(\theta) = \min_{\theta} \mathbb{E} [|\mathbb{E} [L(A_{\theta}(X, R))| X]|^q]^{1/q}
\end{align}
Here, the conditional expectation $\mathbb{E} [L(A_{\theta}(X, R))| X]$ is the averaged loss over $R$. The outer expectation is over inputs. Note that with $q=1$, $\mathcal{L}^q = \mathcal{L}^E$ and with $q=\infty$ we obtain $\mathcal{L}^q = \mathcal{L}^A$.

\subsection{Our training objective.}

Based on the previous description, summarized in Proposition \ref{prop:main}, of when randomization is beneficial, we propose the following practical training objective:
\begin{align}
\label{q_norm_loss_approx}
   \argmin_{\theta} \hat{\mathcal{L}}^q(\theta) = \argmin_{\theta} \biggl(  \frac{1}{n}\sum_{i=1}^n  \Bigl(\frac{1}{m}\sum_{j=1}^m L(A_{\theta}(x_i, r_j))\Bigl)^q \biggl)^{1/q}.
\end{align}
Note that this a biased approximation of equation \ref{q_norm_loss}. Although approximating the $q$-norm by such Monte Carlo sampling in practice is known to slowly converge to the expected value, see \citet{rice_robustness}, we stick to it here now and leave further investigations of refined sampling to future work.
In our training objective, we introduce, compared to the common stochastic approximation of the expected loss over the data, two new hyperparameters namely $m$, the number of random seeds we consider, and $q$. We will analyze the role of these in our experimental results section, focussing on the sensitivity of $q$, as it shifts between ERM and adversarial training. 

We stress that finding adversarial examples to compute the loss in equation \ref{min_max_loss} is difficult, especially in settings where computing gradients poses challenges such as in language with discrete inputs or in reinforcement learning (RL).
 Therefore, the relaxation of $q < \infty$ approximating the min-max adversary can lead to a practical loss for which no explicit adversary is computed. 
Finally, note that the adversarial loss upper bounds the expected loss, i.e., $\mathcal{L}^A(\theta) \ge \mathcal{L}^E(\theta)$. Nevertheless it is not expected that optimizing  $\mathcal{L}^A(\theta)$ will lead to better $\mathcal{L}^E(\theta)$ compared to when optimizing $\mathcal{L}^E(\theta)$ directly. 

In the next section we show how we can turn the above theoretical considerations into practice and present experimental results where optimization leads to randomized algorithms in transformers.

\section{Experimental Results}

Before we present experimental results, we first describe an evaluation protocol to validate experimentally that
\begin{enumerate}
\item the algorithm implemented by a transformer model after optimizing on $\mathcal{L}^q$, for certain $q$ and $m$, induces randomization: The distribution of the random variable $A_\theta(x,R)$ does not collapse i.e. is not degenerate. Therefore $r \rightarrow A_\theta(x,r)$ is not constant.
\item
the randomized transformer algorithm found is performing, compared to baselines described below, better on data that is adversarially chosen.
\item training on expected risk $\mathcal{L}^E$ will not lead to randomization i.e. now $r \rightarrow A_\theta(x,r)$ is constant and does not possess the associated robustness against adversaries.
\end{enumerate}
Finally, we investigate the robustness and scaling of proper randomization with respect to the novel hyperparameters $m$ and $p$. In order to provide the results in a structured way, we present an evaluation protocol which we follow when discussing our empirical findings in the following.
\subsection{Evaluation protocol and baselines}

In most of our experiments, we compare the following four transformer models:
\begin{enumerate}
\item  $\bm{A^E_{r_0}}$  - \textbf{trained on expected loss, single seed}: This is the transformer baseline, trained on ERM, which is by definition deterministic. The transformer is $x \mapsto A_\theta(x,r_0)$ where $r_0$ is a static seed selected at the beginning of training from $\mathcal{R}$. The same $r_0$ is used for evaluation. The transformer is thus trained on $\mathcal{L}^E_{r_0}(\theta)=\mathbb{E}[L(A_\theta(X,r_0))]$.
\item  $\bm{A^q_{r_0}}$ - \textbf{trained on relaxed adversarial loss, single seed}: This is the same transformer as above, but trained on the (relaxed) adversarial loss $\mathcal{L}^q_{r_0}(\theta) = \mathbb{E} [L(A_{\theta}(X, r_0))^q]^{1/q}$ for some parameter $q$. In practice, we approximate the objective using \eqref{q_norm_loss_approx}, where $r_i=r_0$ for all $i$. This transformer is closely related to robust optimization e.g. distributionally robust optimization (DRO), see related works section.
\item  $\bm{A^E_{R}}$ - \textbf{trained on expected loss, multi seed}: This transformer model is given as input a random seed $r$ which is sampled from $\mathcal{R}$, but trained on ERM. Concretely, the transformer is trained on $\mathcal{L}^E_{R}(\theta) = \mathbb{E}[L(A_\theta(X,R))]$.
\item  $\bm{A^q_{R}}$ - \textbf{trained on relaxed adversarial loss, multi seed}: This is our transformer model of main interest trained on our proposed objective in \eqref{q_norm_loss_approx}. This is the model we expect to randomize itself and the focus of our investigation.
\end{enumerate}

Given these models, we evaluate them in the following ways:
\begin{enumerate}
\item For $A^E_{r_0}$ and $A^q_{r_0}$, the loss $L(A_\theta(x,r_0))$ for an input $x$ is computed by using the same seed $r_0$ which was used for training. 
In all our experiments, $A_\theta(x,r_0)$ parametrizes a discrete decision to take, such as the recalling of bits in the associative recall task (Section ~\ref{sec:associative_recall}), the coloring of vertices in the graph coloring task (Section ~\ref{sec:graph_coloring}), or the action to take in the grid world task (Section \ref{sec:grid_world}). In all these cases, $A_\theta(x,r_0)$ produces a $D$-dimensional vector where $D$ is the number of possible discrete decisions. We may train the model by interpreting the prediction as parametrizing e.g. a categorical distribution. To compute the success percentage however, we record whether the argmax of the prediction $A_\theta(x,r_0)$ is correct or not.
Furthermore, we also report the success percentage by sampling from the distribution parametrized by $A(x,r_0)$, in the case the loss $L$ allows for such interpretation (e.g. $L$ is the cross entropy loss). We denote these by $\bm{A^E_{r_0}}$-\textbf{sampled} and$\bm{A^q_{r_0}}$-\textbf{sampled}, resp.

\item For $A^E_{R}$ and $A^q_{R}$, the loss for an input $x$ is computed as the expected loss over random seeds, i.e. $\mathbb{E}[L(A_\theta(x,R))]$. For the success percentage given an input, we  record the fraction of seeds $r$ for which the argmax of the prediction $A_\theta(x,r)$ is correct. Furthermore, we report the success rate of majority voting of the transformer predictions across seeds, which we denote by $\bm{A^E_{R}}$-\textbf{majority} and $\bm{A^q_{R}}$-\textbf{majority}, resp. See Appendix~\ref{app:prob_ampl} for more details.
\end{enumerate}
Finally, we report the performance of these models when averaging over the input distribution (Average) and measure the performance on \textit{adversarial} inputs by reporting the 95th percentile performance values over the input distribution ($95^{th}$-percentile). We use the common transformer architecture and variations throughout our experiments; more details in Appendix~\ref{app:transformer}.

\subsection{Randomized transformers solve associative recall}\label{sec:associative_recall}

\begin{wrapfigure}{r}{0.5\textwidth}
\vspace{-0.8cm}
    \centering   \includegraphics[width=0.5\textwidth]{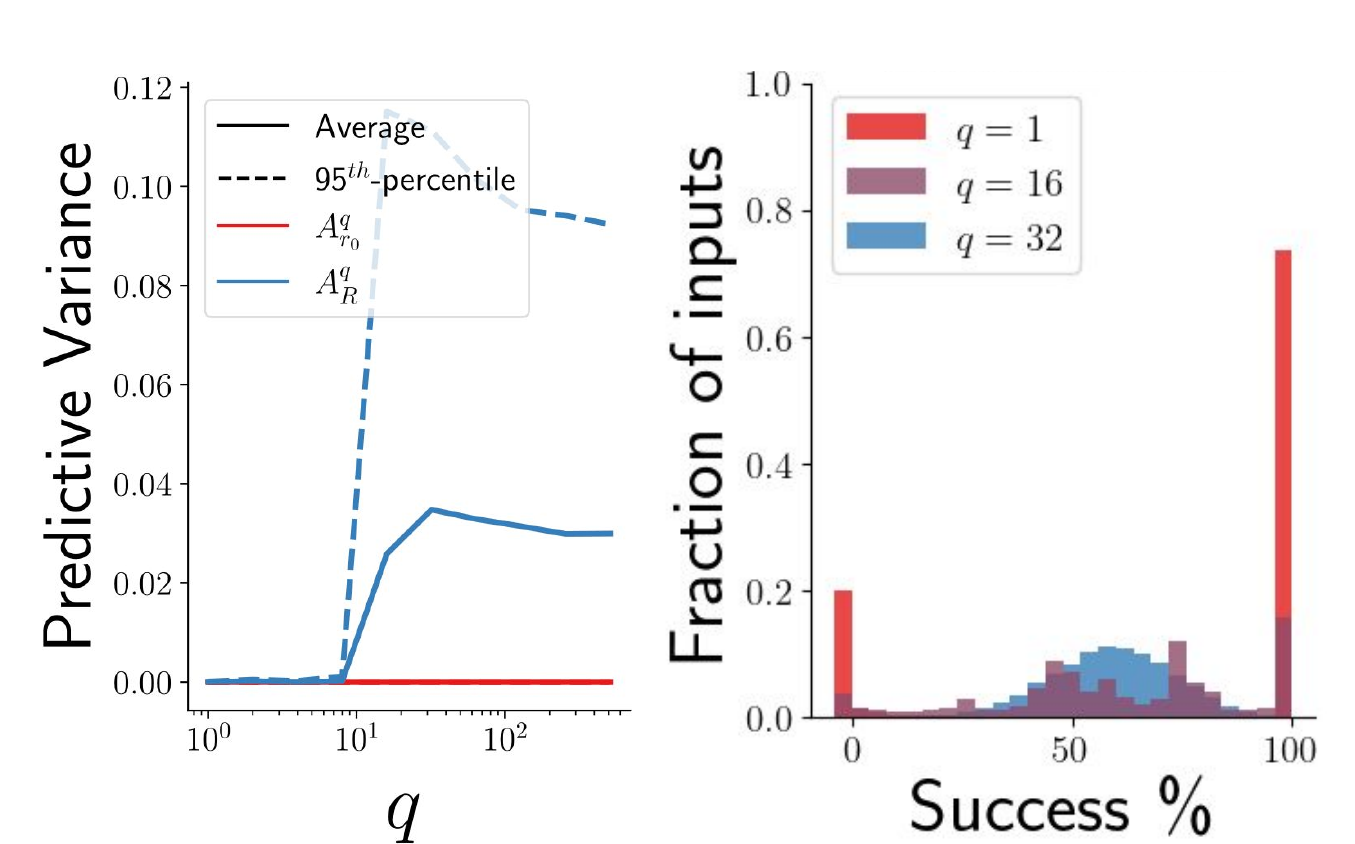} 
  \label{fig:sfig2}
   \vspace{-0.4cm}
      \caption{\textit{Left}: Variance of the predictive probability conditioned on the input sequence, w.r.t. increasing $q$: Larger $q$ leads to randomized transformer models with non-zero output variance over the seeds, with a phase transition around $q \approx 16$.
      \textit{Right}: Histogram showing the fraction of inputs, all of length $N=20$, with varying recall success rates, when training $A^q_{R}$ with various $q$. For ERM training i.e. $q=1$,  we see the transformer producing essentially binary predictions, i.e.the predictions, over seeds r, are either correct or incorrect. For $q> 1$, we see randomization emerging, with non-zero success rate on all input for $q=32$.}
      \label{fig:add_recall}
      \vspace{-0.15cm}
\end{wrapfigure}

We start by revisiting the associative recall task which we introduced in the introduction, cf.\ Figure \ref{fig:assoc_recall}. Here we train transformers with linear self-attention layers i.e. we replace the standard softmax operation with the identity function $E \gets E + (QK^T \odot M)V W_P$. This architecture change allows the transformer to be regarded as a \textit{fast-weight} programmer \cite{schmidhuber_learning_1992, schlag_linear_2021} where an internal fixed size memory matrix can be overwritten with incoming information. Studying memory allocation issues with transformer variants has seen considerate interest recently \citep{gu2023mamba, de_griffin_2024, orvieto_resurrecting_2023, arora_zoology_2023, vonoswald2023uncovering, zucchet2024gated}. 
We give more details on the tokenization and how we provide randomness in Appendix~\ref{appendix:recall}. The transformer is provided with the inputs that are the to-be-remembered key and value pairs i.e. $E = [e_0, \dots, e_N]$
with $e_i = [v_i, k_i, \text{seed}]$, where $v_i$ is a binary vector of $d$ bits $v_i = [v_{i0}, \dots, v_{id}]$ with $v_{ij} \in \{0, 1\}$ and $e_N = [ \cdot, k_i, \text{seed}]$. We train the model on the sum of binary cross-entropy (CE) between ground truth bits and transformer predictions: $L(A_{\theta}(E, r), v_i) =  \sum_j \text{CE}(\hat{v}_j, v_{ij})$. We report the performance of the different models, including when varying $q$ and $m$, in Figure \ref{fig:add_recall} and \ref{fig:assoc_recall_details}.

We start by analyzing ERM trained transformers for which we do not expect randomization to emerge, i.e. $A^E_R = A^q_{R}, A^E_{r_0}=A^q_{r_0}$ with $q=1$, and their variants, Figure \ref{fig:assoc_recall_details}A. All models perform similarly, concluding that ERM models are \textit{not} benefiting from additional randomness, even when random seeds are provided. Furthermore, even transformers that could potentially leverage randomness, i.e. $A^E_R$, collapse in their predictions, see Figure \ref{fig:add_recall} where we show the collapsed predictive variance induced when varying seeds for $A^E_R$. This behaviour changes when increasing $q$. Indeed, when choosing for example $q=100$, we observe that randomization emerges demonstrated by increased predictive variance and gradual, instead of bi-modal, success rate, see Figure \ref{fig:add_recall}. Performance wise, the randomized $A^q_R$ (blue) improves significantly on worst-case inputs and especially $A^q_R$-majority (green), performs almost optimally on all inputs, see Figure \ref{fig:assoc_recall_details}B. We denote $A^E_{r_0}$ as "deterministic" in Figure \ref{fig:assoc_recall} where we show for illustrative purposes the performance of the 95th percentile.
\begin{figure}[htbp!]
    \centering 
    \vspace{-0.35cm}\includegraphics[width=1.0\textwidth]{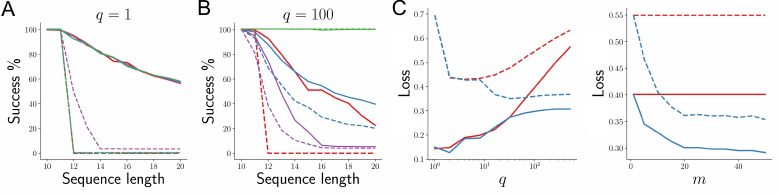}
\vspace{-0.1cm}\includegraphics[width=0.8\textwidth]{Figures/AssociativeRecall/legend_only.png}
\vspace{-0.15cm}
     \caption{\textbf{Associative recall analyses}. \textbf{A)}: Performance of models trained on ERM does not improve when provided with additional seeds, cf.\ overlapping lines of $A^q_{r_0}$ and $A^q_{R}$. \textbf{B)}: Models trained with our objective ($q=100$) do exhibit drastic improvements, especially with majority voting, compared to models trained with a single fixed seed in their input. \textbf{C)}: Influence when training with different $q$ and $m$, measured on the loss with $q=1$. First, models show gradual improvement when increasing $q$, with randomness emerging over a certain threshold. Second, when fixing $q=100$, outer right plot, we observe already for small $m > 1$ improvements over the deterministic counterpart.}
     \label{fig:assoc_recall_details}
     \vspace{-0.35cm}
\end{figure}

We conclude that training linear self-attention transformers to solve associative recall tasks instills a randomized strategy inside the transformer which outperforms deterministic counterparts.

\subsection{Randomized transformers can solve graph coloring problems}\label{sec:graph_coloring}

Randomization has numerous applications in combinatorics as a theoretical and practical tool to design powerful and strikingly simple algorithms. One prominent use case of randomized algorithms is solving graph coloring problems, with the aim to color the vertices of a graph $G = (V, E)$ in such a way that vertices connected by edges are colored differently. Randomized algorithm 
are particularly useful in distributed settings in which each vertex can only see and talk to its neighbors.
See Appendix~\ref{app:cycle_coloring} for a detailed description. We consider the problem of 3-coloring cycles, see Figure \ref{fig:illustration1}. While the problem per se is trivial, the  goal in the distributed setting is to minimize the time/number of communication rounds until the coloring is found.
\begin{figure}[htbp!]
    \centering\includegraphics[width=0.95\textwidth]{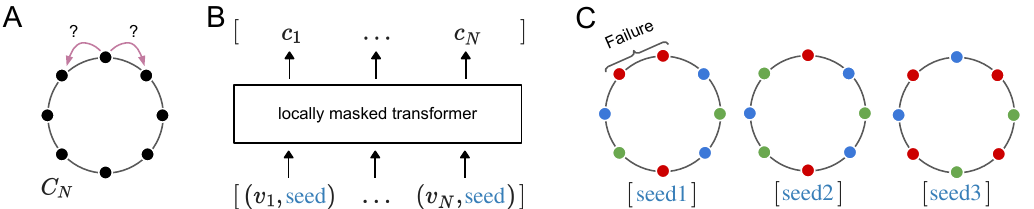} \caption{\textbf{Transformers solving graph coloring problems}. \textbf{A)}: We study distributed vertex coloring problems on cycles $C_n$ where every vertex can only communicate with its immediate neighbors. \textbf{B)}: A locally masked transformer, which can only attend to the immediate neighboring vertices on the graph by an appropriate attention mask, has to decide on the vertex color. \textbf{C)}: If the transformer realizes a fixed mapping of vertex id $v_i$ to a color, an adversary can provide an input permutation for which the transformer fails to generate a correct coloring. However, when trained with our objective, the transformer model, to protect itself against this adversary, implements a randomized strategy. The coloring computed by the transformer for a fixed graph, now depends on the random seed and will fail in some and be correct in others. With the probability of being correct hopefully being large. 
    }
    \label{fig:illustration1}
    \vspace{-0.4cm}
\end{figure}

We consider $N=10$ vertices, each given a unique id from $1$ to $N$. Each task consists of the 3-coloring problem i.e. a cycle drawn uniformly from the set of possible cycles on the $N$ vertices. A transformer takes $N$ tokens, each consisting in a distinct $N$-dimensional 1-hot vector, representing the $N$ vertices. The random noise is again concatenated to each of these tokens, see Figure \ref{fig:illustration1}. The adjacency matrix of the cycle is translated into an attention mask of the transformer, such that each vertex can only attend to its neighbors on the cycle. No other information about the particular configuration of the cycle is given. Every token stream must then output the distribution over its color. More experimental details in Appendix~\ref{appendix:graph}. 

The objective is a partial coloring loss, which is an upper bound of the probability of the model outputting an invalid coloring of the graph. Specifically, given all edges $E$ of a given cycle, the loss is defined as: $ \sum_{(u,v) \in E}\sum_{c\in \{1,2,3\}} \mathbb{P}(C(u)=c)\mathbb{P}(C(v)=c) $ which upper bounds the quantity $\mathbb{P}( \bigcup_{(u,v)\in E} C(u)=C(v))$ where $C(u)$ is the random variable assigning a color to $u$, sampled from the softmax distribution one for each token output by the model. 

\begin{figure}[htbp!]
    \centering 
\includegraphics[width=1.0\textwidth]
{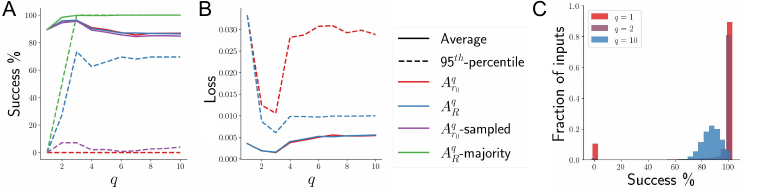}
\vspace{-0.6cm}
     \caption{\textbf{Graph coloring performance analyses}. \textbf{A)}: Performance of models trained on $\mathcal{L}^q$ for varying value of $q$. As $q$ increases, $A^q_{R}$ learns to leverage randomness to implement a randomized algorithm, inducing large improvements of worst-case performance compared to the deterministic counterpart $A^q_{r_0}$. Furthermore, majority voting boosts performance to close to optimal performance (green). Both the average and percentile curves are computed over all possible cycles of size N=10. \textbf{B)}: Influence of varying  $q$ during training evaluated on the loss with $q=1$. After training with $q> 3$ a clear advantage of the randomized model is observed. \textbf{C)}: Histogram over all possible inputs, of the transformers success probability when varying $q$ when training $A^q_R$. When increasing $q$ we observe non-zero success rate due to randomization.}
    \label{fig:3-coloring}
\end{figure}

For our experiments we choose the transformer depth and the task difficulty defined by $N$ such that there cannot be enough communication between vertices to coordinate and return a valid coloring. A deterministic strategy may work on a large fraction of possible cycles, but may fail on the remainder. To investigate the effect of the adversarial loss on the emergence of proper randomness, we again train $A^q_{r_0}$ and $A^q_{R}$ for varying adversarial strength $q$. We present our results in Figure~\ref{fig:3-coloring}. We see that at $q=1$ the performances of the various transformers are almost identical. In particular, despite having a random source at its disposition, both $A^q_{R}$ and $A^E_{R}$ learn a degenerate transformer that is, for each input, either correct or incorrect for all seeds, cf plot on the right, $q=1$. As $q$ increases proper randomness emerges and, in particular, the performance of $A^q_{R}\text{-majority}$ improves significantly close to $100\%$ success probability. This signals that the transformer learned a randomized algorithm, where each seed underfits different parts of the input space such that, on expectation, the transformer returns a correct output for each input with relatively high probability. Intriguingly, $A^q_{r_0}\text{-sampled}$, despite leveraging randomness at the output, does not recover the same performance as the properly randomized transformer, even though the objective optimizes the predictive distribution.
To conclude, we are able to show that powerful randomization emerges within transformers from first principles due to optimization in the graph coloring setting as well.
   
\subsection{Randomized transformer agents explore grid worlds}\label{sec:grid_world}

\begin{figure}[htbp!]
    \centering
     \vspace{-0.15cm}
\includegraphics[width=1.05\textwidth]{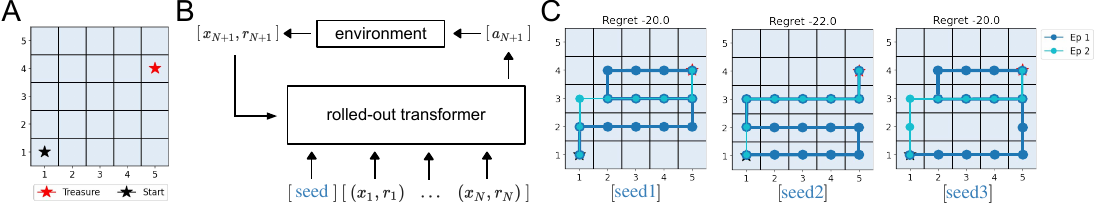}
        \vspace{-0.4cm}
    \caption{\textbf{Transformer agents exploring grid worlds in-context}. \textbf{A)}: The transformer agent is trained to explore a simple grid efficiently and search for a randomly placed treasure.  \textbf{B)}: 
    The transformer agent's output provides the next action (right, left, up, down) for which the environment returns the next state and a reward. The tuple is appended to the transformer model's context.
     \textbf{C)}: Given different initial seeds, the agent chooses different explorations as it is adversarially trained to avoid a deterministic strategy which could be easily attacked leading to high negative regret. We observe that after an initial exploration phase (Ep 1), the transformer exploits in a second episode (Ep 2) the previously obtained knowledge i.e. the treasure's position and approaches it optimally.}
    \label{fig:grid_world}
    \vspace{-0.25cm}
\end{figure}

To showcase that our objective can discover randomized algorithms when lacking differentiability, we study rolled out transformers which we train to explore and exploit simple grid worlds with evolutionary strategies, see Appendix~\ref{appendix:grid} for details on the environment dynamics as well as training setup. We stress that we do not consider reinforcement learning strategies e.g. based on policy gradients, as this requires sampling from a policy and therefore uses additional randomness. Here we aim to isolate the effect of randomness provided to the input of the transformer i.e. the first token.

As illustrated in Figure \ref{fig:grid_world} C, after training the transformer on our objective, the model successfully implements a randomized strategy: Over different random seeds, the transformer explores the grid differently in the first episode while exploiting in the second episode its knowledge about the treasure location. The performance during training when comparing to $A^E_R$ is shown in Figure \ref{fig:grid_world_perf} where we see again $A^q_R$ outperforming wrt. worst-case inputs and coming close to $A^E_R$ on average. In Appendix~\ref{appendix:grid}, we also provide the other baselines $A^q_{r_0}, A^E_{r_0}$ where the former did not properly train given our hyperparameter scan, and where we observe that $A^E_{r_0} \approx A^E_{R}$, see Figure \ref{fig:grid_world_appendix}. 
To quantify randomization, we visualize the average time the transformer agent visits a cell for the first time - showing its randomized exploration when considering  $A^q_{R}$ instead of $A^E_{R}$, see Figure \ref{fig:grid_world_perf}.

To conclude, even in this non-differentiable setting, optimization forces the transformer to use the provided randomness and instils efficient randomized exploration inside the transformer weights.

\begin{figure}[htbp!]
    \centering \includegraphics[width=1.\textwidth]{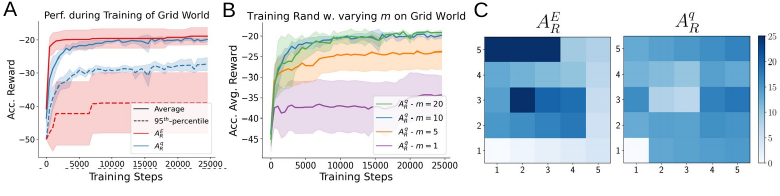}
    \vspace{-0.4cm}
    \caption{\textbf{Grid world performance analyses}. \textbf{A)}: Accumulated reward over training transformers on our adversarial loss leading to models $A_R^E$ and  $A_R^q$. The randomized model, while competitive on average, drastically outperforms on adversarial inputs. \textbf{B)}: The adversarial loss requires several seeds for proper optimization. We scan $m$ for the transformer $A^q_{R}$ and observe that with $m \geq 10$ we reach close to optimal average performance, while being robust to adversarial inputs. \textbf{C)}:  Average timestep at which the agent visits a given cell for the first time capped at 25. Despite having access to random seed, $A_R^E$ degenerates to a deterministic trajectory. On the other hand, $A_R^q$ learns a collection of trajectories such that on average, the timestep of the first visit is relatively uniform over cells.}
    \vspace{-0.25cm}
  \label{fig:grid_world_perf}
  \vspace{-0.15cm}
\end{figure}

\section{Discussion \& Related work}

\textbf{Summary \& Limitations.} Randomness is an integral part of computer science. In particular, there is a long history in developing and analyzing randomized algorithms. In this paper, we build a bridge between this intriguing class of algorithms and deep learning and study under which circumstances optimization learns powerful \textit{randomized transformer algorithms}. By optimizing a well-motivated objective we show that randomness is chosen by optimization to endow transformer models with remarkable properties similar to those which classic randomized algorithms posses. Specifically, our learned randomized transformer algorithms significantly outperform their deterministic counterparts on adversarial inputs. Furthermore, simple majority voting among predictions computed on different seeds can boost performance far beyond deterministic transformers. Nevertheless, the provided evidence is only a first conceptual step into how to learn powerful randomized neural network algorithms, although we believe that the presented theory is general and applies to more practically relevant and large scale settings. Furthermore, it is noteworthy that the computational intensity of our approach limits the applicability and scalability of the current form of our approach. Note that when comparing to ERM trained models, the hyperparameter $m$ scales the memory as well as train and inference time (roughly) by a factor of $m$. We leave scaling up our approach for future work.

\textbf{Sampling} from distributions, such as policies or probabilistic models, requires randomness. However, it remains unclear and a key research direction to analyze the differences, advantages, and synergies of sampling from policies or (Bayesian) probabilistic models versus input-level sampling, as proposed here.
In deep learning, methods like Markov-Chain-Monte-Carlo (MCMC) and variational inference \citep{kingma2013autoencoding, pmlr-v32-rezende14}, loosely including dropout \citep{JMLR:v15:srivastava14a, gal2015dropout} and ensembling \citep{NIPS2017_9ef2ed4b}, are related to our approach. This also applies to latent variable models like generative adversarial networks (GANs) \citep{NIPS2014_5ca3e9b1}. These methods use randomness to sample from prior or (approximate) posterior distributions.
However, we emphasize that in these methods, randomness is an intentional design choice and forced upon the algorithms computational graph rather than a learned characteristic. Therefore, when trained with our objective, weight distributions that randomize network predictions might be learned without a Bayesian formalism. This suggests a connection between Bayesian inference and our method leading to randomized neural networks based on, for example, "adversarial weight uncertainty".

\textbf{Relationship to neuroscience.} Randomness is also hypothesized to be useful for the brain. On the one hand, it is well-known that the brain is exposed to noise but also produces activity that resembles chaos \citep{Sensitivity_lathem, rajer1996ChaosIN, Faisal_noise}. On the other hand theories on functional properties due to criticality and learning algorithms that harness the randomness within the activity of recurrent neural networks exist \citep{lengler, MOSS2004267, Shew_crit, Jaeger2004HarnessingNP, Sussillo2009GeneratingCP}. Nevertheless, an objective that leads, in theory and in practice, to learning and leveraging randomness is not known to us. Here, we provide such an objective: the desire to perform well in worst-case scenarios. Furthermore, we provide evidence that optimization of the objective actually leads to randomized behaviour. The presented results provide an explanation of human and animal behavior and decision-making processes which resemble taking actions leveraging randomization \citep{10.7554/eLife.39787,Glimcher_determin, 2Noble_stoch, Domenici_roach, Braun2021StochasticityVD}, hypothesizing that learning in the brain or evolution might optimize for some relaxation of worst-case behaviour as well. 

\textbf{Meta-learning learning transformer algorithms.} Through the advent of the transformer and large foundation models, at least partially, we are diverging from the classic perspective of deep neural networks learning abstract complex data representation \cite{lecun2015deep}. Indeed, large language models can be regarded as flexible tools and even algorithms \cite{pmlr-v139-weiss21a, lindner2023tracr, pmlr-v202-giannou23a} which can learn in-context \cite{brown_language_2020, garg_what_2022, akyurek_what_2023, von_oswald_transformers_2023, giannou_looped_2023, liu2023transformers, pmlr-v202-li23l} and be re-configured by prompting \cite{NEURIPS2022_9d560961, radford2019language,lester2021power}.  We expand the class of algorithms that transformers 
can learn from data, to the family of randomized algorithms. We hope that through this work, we put more emphasis on the neural algorithms perspective with exciting future research direction aiming towards distilling known, e.g. stochastic gradient descent, or discovering novel, unknown randomized algorithms hidden inside the trained weights of deep neural networks.

\textbf{Robustness.} At the heart of our objective lies the desire to perform better in adversarial settings, a goal shared by established deep learning and RL methods. This goal is closely related to having more robust models. In this context, the incorporation of randomness has already proven to be beneficial in many domains. For instance, adversarial bandits have been extensively studied within the broader field of RL, particularly as a robust extension of the classical multi-armed bandits. Foundational work in adversarial bandits by \citet{doi:10.1137/S0097539701398375} introduces the EXP3 algorithm which uses randomization to cope with oblivious adversaries. This line of research is already conceptually close to our grid world task, where smart exploration is learned to protect the agent against adversarial environments. Due to these connections, we speculate that our approach could allow for randomized algorithms being learned when trained with our objective in the domain of adversarial bandits and RL.

In another line of work, in the domain of supervised learning, common adversarial training techniques are entirely deterministic. Some of them \citep{aleks2017deep} employ gradient-based attacks to augment training data with the most challenging perturbations, aiming to ensure consistent output within the neighborhood of any input. Other work investigated objectives aiming at prepare a model to perform well under distributional shifts \citep{arjovsky2020invariant}. These objectives are very similar to our training objective \citep{duchi2020learning, Rahimian_2022}, except for the random component. Recent advancements have focused on improving robustness through randomization, either of the model parameters or of the inputs, during both training and inference \cite{rakin2018parametricnoiseinjectiontrainable}. These methods, when combined with adversarial training, are closely related to our proposed randomization technique. However, our approach determines the adversarial input (or perturbation) based on the expected loss over random seeds, rather than relying on a single sampled seed i.e. $m=1$ - a crucial distinction to our objective which relied on $m>1$ as shown by our experiments.

Randomization is also used in randomized smoothing \cite{cohen2019certifiedadversarialrobustnessrandomized}, a certified adversarial robustness method. This technique computes the majority vote over random perturbations of the input rather than over random seeds for a given input. We hypothesize that our procedure may offer better robustness guarantees by additionally randomizing decision boundaries. Overall, our randomization technique is readily applicable to methods aimed at defending against adversarial attacks and distribution shifts \cite{sinha2020certifying, Zou2023UniversalAT}, positioning it as a promising avenue for future research.

\section{Acknowledgment}
Johannes von Oswald would like to express his gratitude to Andrey Zhmoginov who greatly helped to shape this project in its early stages. 
Furthermore the authors would like to thank again Andrey Zhmoginov and João Sacramento, Blaise Agüera y Arcas, Guillaume Lajoie, Frederik Benzing, Marc Kaufmann, Alexander Meulemans, Robert Lange, Johannes Lengler and Nicolas Zucchet for insightful discussions throughout the project and their helpful comments on the manuscript.

\bibliography{rng_bib}

\newpage
\appendix

\section{Appendix}

We provide here additional results and training details to reproduce the experiments one by one, as well as a proof of proposition \ref{prop:main}.

We give first a short overview of the compute budget used for this project. For the associative recall task as well as the graph coloring problem, we estimate a total compute budget using 4 Nvidia RTX 4090 for a month. For the grid world problem, we use a cluster of 16xA100 GPUs which we used to scan over the hyperparameters on and off over a total of 2 weeks.  

\subsection{Proof of Proposition \ref{prop:main}}
\label{app:proof}

We first restate Proposition \ref{prop:main}:

\setcounter{proposition}{0}

\begin{proposition}[Randomization can be beneficial in worst-case scenarios] 
 Assume that $\mathcal{X}$ is a compact set of $\mathbb{R}^d$ for some $d$, and that $L$ is continuous. Furthermore assume that there exist a parameter $\theta^*$ and a set of random seeds $(r_i)_{1\leq i \leq N} \in \mathcal{R}^N$ such that for each $i$
\begin{equation}
\label{eq:cdt}
\max_{x\in\mathcal{X}} L(A_{\theta^*}(x, r_i)) = \min_{\theta, r}\max_{x\in\mathcal{X}} L(A_{\theta}(x, r)), \\
\text{ and }\\
 \bigcap_i \arg\max_{x'}  L(A_{\theta^*}(x', r_i))=\emptyset.
\end{equation}
Then there exists a randomized model which has a strictly smaller loss $\mathcal{L}^A$ than any deterministic model. 

\end{proposition}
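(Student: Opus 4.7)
The plan is to exhibit a specific properly randomized model whose adversarial loss $\mathcal{L}^A$ lies strictly below the optimal deterministic worst-case value, and then note that every deterministic model is lower-bounded by that value. Concretely, let $L^* := \min_{\theta,r}\max_{x\in\mathcal{X}} L(A_\theta(x,r))$, the common value appearing in the first part of the hypothesis. For a deterministic choice the pair $(\theta,r)$ is fixed and the loss satisfies $\mathcal{L}^A(\theta) = \max_x L(A_\theta(x,r)) \ge L^*$, so it suffices to construct one randomized model with $\mathcal{L}^A < L^*$.

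The candidate I would analyze uses parameter $\theta^*$ from the hypothesis together with the uniform distribution over the finite set $\{r_1,\ldots,r_N\}$ (which is realizable either by declaring this to be the seed distribution or by using the transformer's first layer to map $R$ to one of $N$ equiprobable buckets). Define the per-input averaged loss $f(x) := \frac{1}{N}\sum_{i=1}^N L(A_{\theta^*}(x,r_i))$, so that $\mathcal{L}^A$ of this model equals $\max_x f(x)$. The first part of the hypothesis gives $L(A_{\theta^*}(x,r_i)) \le L^*$ for every $x$ and every $i$, bounding each summand. The second part, emptiness of the common intersection of the argmax sets, says that no single input $x$ attains the maximum for every seed $r_i$ simultaneously. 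Hence for each fixed $x \in \mathcal{X}$ there is at least one index $i$ with $x \notin \arg\max_{x'} L(A_{\theta^*}(x',r_i))$, so $L(A_{\theta^*}(x,r_i)) < L^*$ strictly for that index, and averaging yields the pointwise strict inequality $f(x) < L^*$.

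The last step I would carry out is to upgrade this pointwise strict bound to a strict bound on the supremum. Since $L$ is continuous and the transformer map $x\mapsto A_{\theta^*}(x,r_i)$ is continuous in its input, $f$ is continuous on the compact set $\mathcal{X}$ and hence attains its maximum at some $x^\star \in \mathcal{X}$; applying the previous step at $x^\star$ gives $\max_x f(x) = f(x^\star) < L^*$, which delivers a randomized model strictly better than any deterministic one. The main obstacle I anticipate is precisely this final step: the pointwise inequality $f(x) < L^*$ alone does not in general preclude $\sup_x f(x) = L^*$ being approached by a sequence $x_n$; compactness together with continuity is exactly what forces the supremum to be attained and thus strictly below $L^*$. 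A secondary point worth making explicit in the write-up is that the emptiness condition ensures the worst-case inputs across seeds are genuinely different, which is the conceptual mechanism by which mixing over seeds strictly improves the adversarial loss.
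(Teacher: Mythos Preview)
Your proposal is correct and follows essentially the same route as the paper: define the deterministic optimum $L^*$ (the paper calls it $M$), take the uniform distribution on $\{r_1,\ldots,r_N\}$ with parameter $\theta^*$, observe that each summand is bounded by $L^*$ with at least one strict inequality at every $x$ so the averaged loss is pointwise strictly below $L^*$, and then use compactness and continuity to pass from a pointwise strict bound to a strict bound on the maximum. The only cosmetic difference is that the paper phrases the last step as a contradiction via a maximizing sequence and a convergent subsequence, whereas you invoke directly that a continuous function on a compact set attains its maximum; these are the same argument.
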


\begin{proof}
Assume there exist a parameter $\theta^*$ and a set of random seeds $(r_i)_{1\leq i \leq N} \in \mathcal{R}^N$ such that for each $i$ \eqref{eq:cdt} holds.

Let $M=\min_{\theta, r}\max_{x\in\mathcal{X}} L(A_{\theta}(x, r))$. Any deterministic model, i.e. any choice of parameter $\theta$, and fixed seed $r \in \mathcal{R}$, will yield an adversarial loss of at least $M$. On the other hand, consider $R$ following a uniform distribution on the $(r_i)_i$. 

Let us now assume that 

$$ \inf_{x\in\mathcal{X}} \mathbb{E}[L(A_{\theta^*}(x, R))] = \inf_{x\in\mathcal{X}} \frac{1}{N} \sum_i L(A_{\theta^*}(x, r_i)) = M$$

We can then fix a sequence $(x_n)_{n\in\mathbb{N}}\in \mathcal{X}^\mathbb{N}$ such that $ \lim_{n\rightarrow \infty}\mathbb{E}[L(A_{\theta^*}(x_n, R))] =  M$. Since $\mathcal{X}$ is a compact set, without loss of generality we can assume $(x_n)_{n\in\mathbb{N}}$ to converge to some $x^*$. By continuity of $L$, we have $\mathbb{E}[L(A_{\theta^*}(x^*, R))] =  M$. Thus, necessarily, for all $i$, $L(A_{\theta^*}(x^*, r_i)) =  M$, i.e. $x^* \in \bigcap_i \arg\max_{x'}  L(A_{\theta^*}(x', r_i))$ which contradicts that assumption in \eqref{eq:cdt}.

We therefore have, for each $x \in \mathcal{X}$, 
$$\mathcal{L}^A(\theta^*) = \inf_{x\in\mathcal{X}} \mathbb{E}[L(A_{\theta^*}(x, R))] < M$$
\end{proof}


\subsection{The transformer architecture and the random seed encoding}
\label{app:transformer}

We provide here a short recap of the transformer neural network architecture \citep{vaswani_attention_2017, phuong2022formal} that we use throughout our experiments. We define a transformer of depth $K$ as being the transformation in blocks of the following two operations $K$ times. First, we are considering input tokens $E \in \mathbb{R}^{L \times d_m}$ for a sequence of length $L$, to which we add common positional encodings $P \in \mathbb{R}^{L \times d_m}$, and either add or, as we do in this paper in most experiments, concatenate the \textit{random seed encoding} (RSE). This is simply $R \in \mathbb{R}^{L \times d_r}$ or $R \in [0, 1]^{L \times d_r}$ where each element of $R$ is sampled randomly from a unit interval or a Normal distribution. Then, a self-attention layer followed by a multi-layer-perceptron transform the tokens by first computing queries, keys and values $Q, K, V = EW_q, EW_k, EW_v $ with which we then update $E$ as
\begin{align}
  \label{eq:attention-dynamics}
 E & \gets E + (\text{softmax}(QK^T) \odot M)V W_P \\ 
 E & \gets E + \sigma(EW_1)W_2
\end{align}

where $W_q, W_k, W_v \in \mathbb{R}^{d_m \times d_k}$ and $W_p \in \mathbb{R}^{d_k \times d_m}$ as well as $W_1 \in \mathbb{R}^{d_m \times 4d_m}, W_2 \in \mathbb{R}^{4d_m \times d_m}$ are learnable parameter matrices. The $\text{softmax}$ operation is applied row-wise. $M$ is a 0-1 mask that controls the attention span, $\sigma$ a non-linearity. We apply LayerNorm \cite{ba_layer_2016} to the inputs of the self-attention layer as well as the MLPs. We now present results where we trained the just described transformer, as well as variants, on our training objective. We stress that these models are parametrized in a way, although potentially only by taking positional encodings into account, which allows to condition computation on the input position and therefore ignore the parts where we provide randomness. We see that this is indeed the case when training on ERM and show this in the following.

\section{Randomized Transformers solve associative recall tasks}
\label{appendix:recall}

\begin{figure}[htbp!]
    \centering \includegraphics[width=0.6\textwidth]{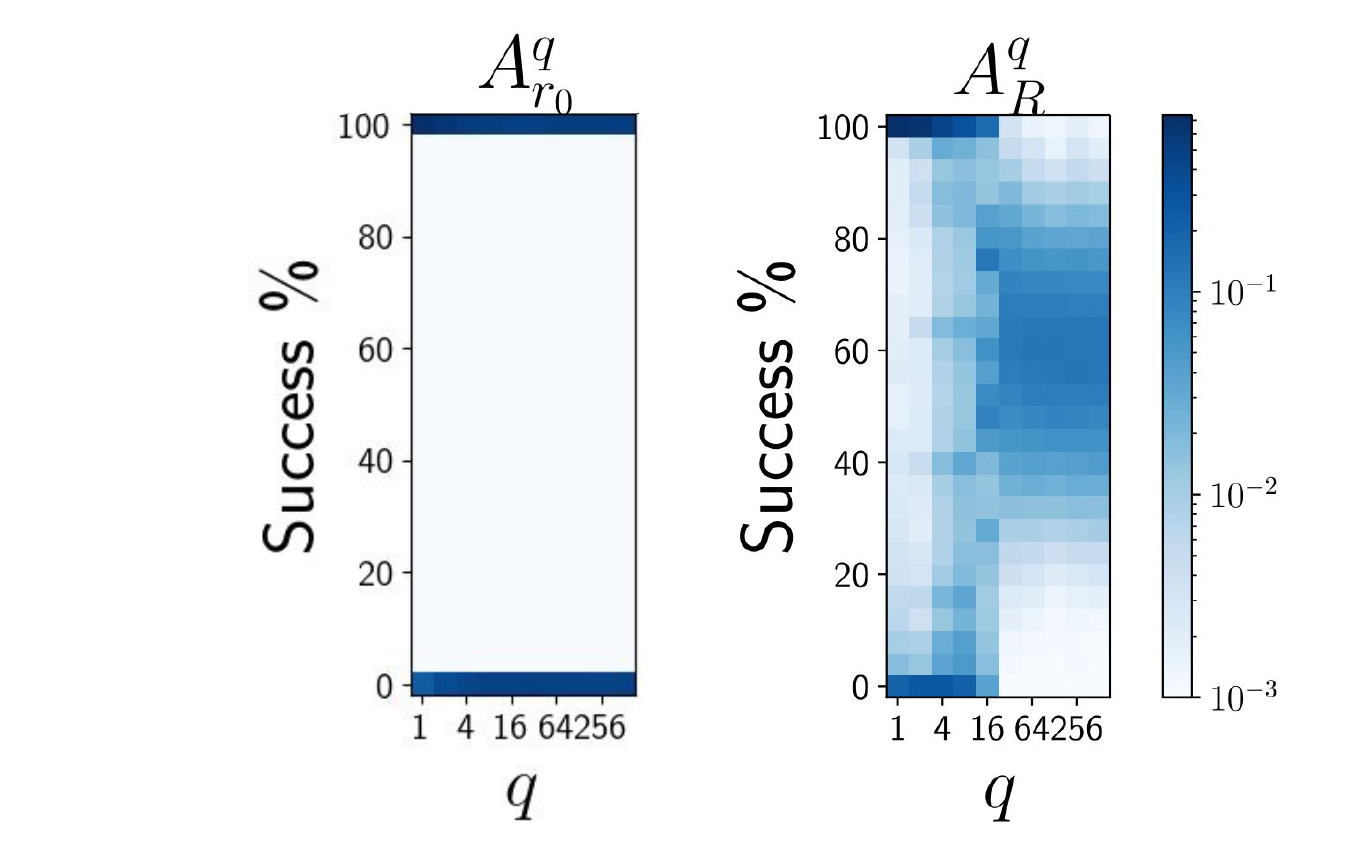} 
      \caption{Fine-grained performance analyses of $A^q_{R}$ models trained with varying $q$ on the associative recall task. When increasing $q$, above ERM (q=1), we observe non-zero success rate on all inputs due to randomization. This success probability, see main text, can be increased by majority voting.}
       \label{fig:assoc_appendix}
\end{figure}

\begin{table}[H]
\begin{center}
\caption{Hyperparameters for the associative recall task.}
\footnotesize
\vspace{0.5cm}
\begin{tabular}{@{}ll@{}}
\toprule
Hyperparameter            & Value \\
\toprule
 Dataset & Randomly generated binary value vectors with $d=5$ and corresponding  \\ & one-hot encodings as the keys\\
 Tokenization \& RSE & One token is the concatenated vector $[v_i, k_i, r_i]$ where $r_i$ is \\ &  (the same) random binary vector for all $i$ with $d_r = 10$. \\ 
 Context size & Variable size from 8 - 20, see Figure \ref{fig:assoc_recall} \\
 Optimizer & Adam \citep{kingma_adam_2015} with $\epsilon= 1e^{-5}, \beta_1 =0.9, \beta_2=0.95$ \\
 Hyperparamters of our objective & $m=30$ and $p=100$ \\
  Batchsize &  512 \\
 Gradient clipping & Global norm of 1. \\
  Positional encodings &  We add standard positional encodings. \\
  Architecture details &  2 transformer blocks with linear self-attention, 1 head, key size 5, token size 15, \\ &  no input- but output-embedding \\
    Attention mask details & Causal mask \\
 Weight init &  Truncated normal initial with variance computed by common fan-in, \\ &  bias parameter to zero. We scale all weight matrices before a skip \\ & connection with $\frac{1}{2\sqrt{N}}$ with $N$ the number of layers.\\
Learning rate scheduler &  Linear warm-up starting from $0$ to $0.003$ for 2000 steps annealed to  $0.0003$ \\
Standard deviation / Stat. robustness & We average all results over 5 random seeds. We omit showing the deviation due \\ & to negligible differences across seeds. \\

\bottomrule
\end{tabular}
\end{center}
\end{table}

\section{Randomized Transformers solve a graph coloring task}
\label{appendix:graph}

\begin{figure}[htbp!]
    \centering \includegraphics[width=0.6\textwidth]{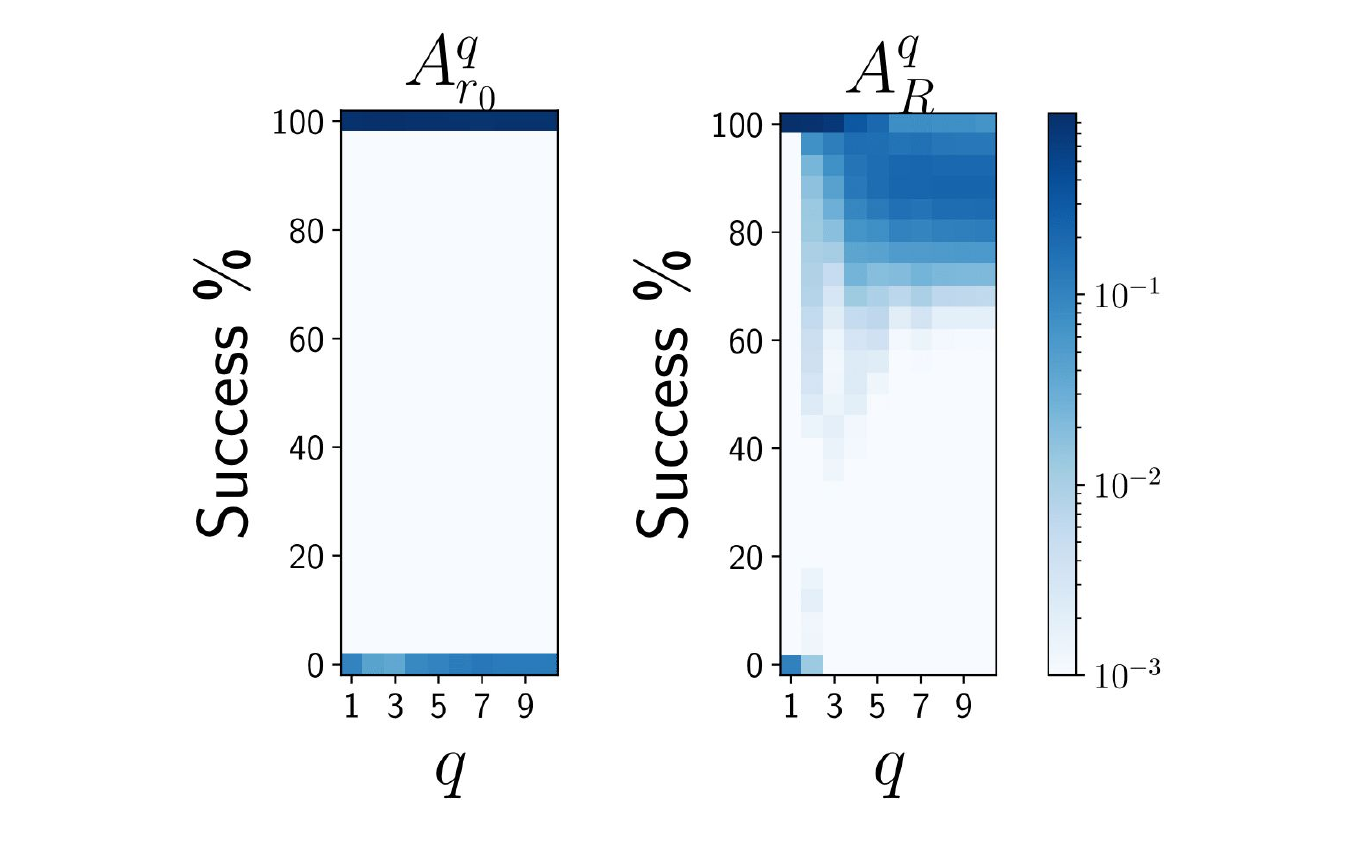} 
      \caption{Fine-grained performance analyses of $A^q_{R}$ models trained with varying $q$ on the 3-coloring problem. When increasing $q$, above ERM (q=1), we observe non-zero success rate on all inputs due to randomization. This success probability, see main text, can be increased by majority voting.}
       \label{fig:coloring}
\end{figure}

\begin{table}[H]
\begin{center}
\caption{Hyperparameters for the graph coloring task.}
\footnotesize
\vspace{0.5cm}
\begin{tabular}{@{}ll@{}}
\toprule
Hyperparameter            & Value \\
\toprule
 Dataset & Random permutation of graph indices\\
   Randomness &  Single random floating point concatenated to embedding \\ 
 Tokenization & Every token is the concatenated vector $[v_i, k_i, r_i]$ where $r_i \in  [0,1 ]$ and $d_r=1$. \\ 
 Context size & 10 i.e. we only consider cycles $C_{10}$ \\
 Optimizer & Adam \citep{kingma_adam_2015} with $\epsilon= 1e^{-3}, \beta_1 =0.9, \beta_2=0.95$ with weight decay of  $0.1$\\
 Training steps &  300000 \\
 Hyperparamters of our objective & $m=10$ and $p=10$ \\
  Batchsize &  256 \\
 Gradient clipping & Global norm of 1. \\
  Positional encodings &  We add standard positional encodings. \\
  Architecture details & 2 blocks of self-attention, 1 head, key size 16, token size 16, \\ &  no input- but output-embedding \\
  Attention mask details & Mask that only allows to observe direct neighboring tokens\\
 Weight init &  Truncated normal initial with variance computed by common fan-in, \\ &  bias parameter to zero. We scale all weight matrices before a skip \\ & connection with $\frac{1}{2\sqrt{N}}$ with $N$ the number of layers.\\
Learning rate scheduler &  Linear warm-up starting from $0$ to $0.001$ for 1000 steps annealed to  $0.0001$ \\
Standard deviation / Stat. robustness & We average all results over 5 random seeds. We omit to show the deviation \\ & due to negligible differences across seeds. \\
\bottomrule
\end{tabular}
\end{center}
\end{table}

\section{Training transformer agents to explore and exploit grid worlds.}
\label{appendix:grid}

We provide here additional results and details accompanying the main text. 

Here, we first described the environment dynamics, loss functions and further details. For every task, we set the starting point of the agent in the lower left corner of a $5\times5$ grid but sample a random treasure location. The aim of the agent is now to explore the grid world efficiently. After $25$ steps, the agent's position is reset and is given an additional 25 steps to exploit the (potential) knowledge of the treasure location. After every step, the transformer agent is either given a reward of $1$ if the treasure is found and $-0.1$ otherwise, i.e.  $L(A_{\theta})=  \sum_i r_i $. To have the ability to explore differently, the first token of the model is a vector sampled from a normal distribution. Given this vector, the transformer first emits an output probability over the 4 actions $\{\text{left, up, right, down}\}$ from which we extract the action by computing the argmax. Given the environment response providing the reward and next state, the concatenated embedding of the $(i, j)$ coordinates as well as the reward is appended to the sequence which is fed back to the same transformer agent. If the transformer finds the treasure, the episode terminates. To compute an adversary in this setting, we do not fall back to the $q$-norm relaxation. Rather we compute the worst-case treasure position for the current transformer strategy by brute force iteration over all treasure positions. The adversarial position in the grid is the one with the lowest accumulated reward. We stress the scalability issues, which we leave for future work, of this strategy to compute an adversary for most practical environments. To optimize the transformer weights, we use parameter-exploring policy gradients (PEPG) \citep{SEHNKE2010551} a common gradient-free optimization method. PEPG estimates gradients based on evaluating populations of weight samples. Unlike methods like OpenAI-ES \citep{salimans2017evolution}, it leverages a diagonal search covariance to infer directions of improvement.\\

\begin{figure}[htbp!]
    \centering \includegraphics[width=0.45\textwidth]{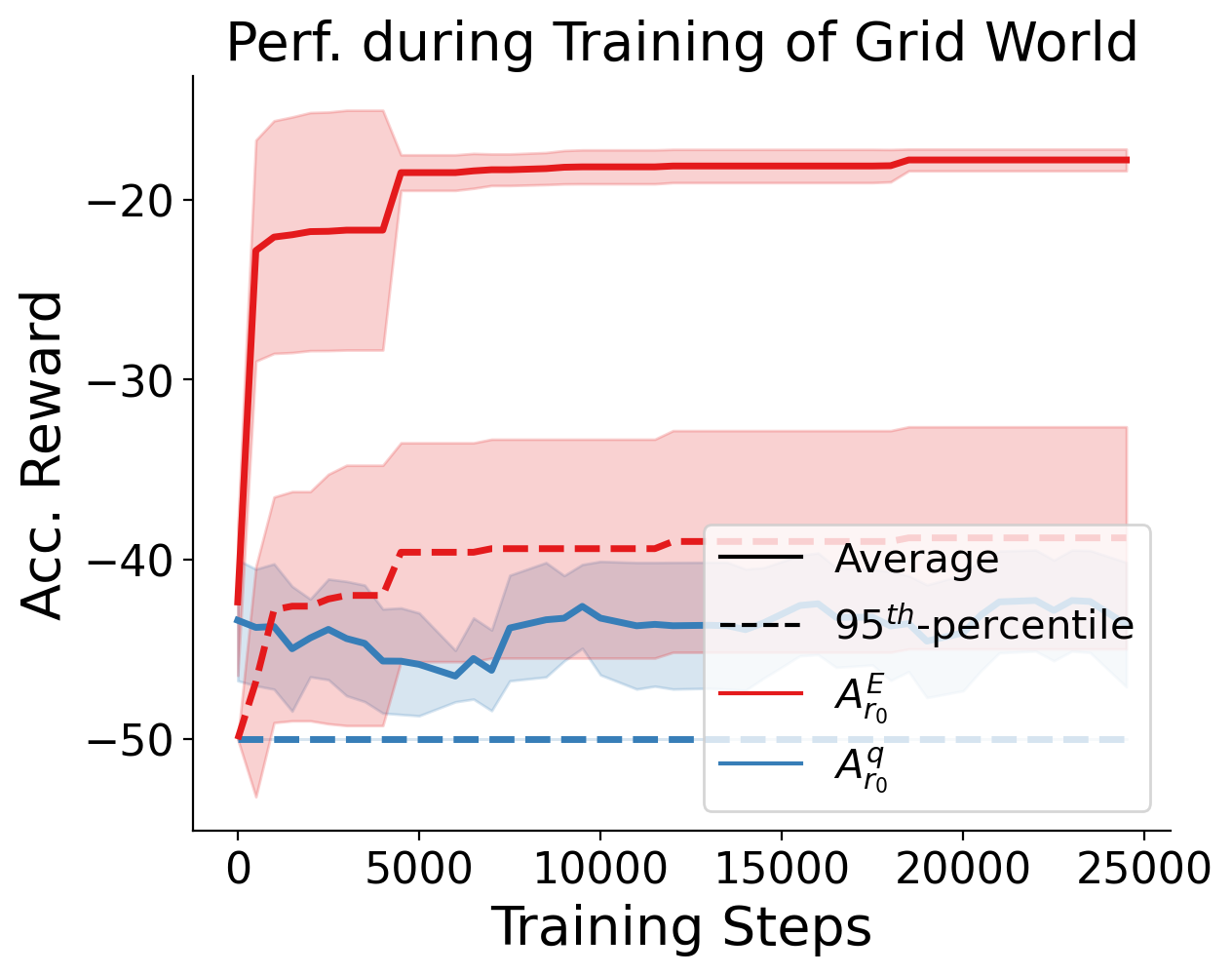} 
      \caption{Performance of models $A^E_{r_0}$ and $A^q_{r_0}$. While $A^q_{r_0}$ is difficult to train, given the hyperparameter sweep considered, $A^E_{r_0}$ performs similarly as $A^E_{R}$, see main text.}
       \label{fig:grid_world_appendix}
\end{figure}

\begin{table}[H]
\begin{center}
\caption{Hyperparameters for the grid world task.}
\footnotesize
\vspace{0.5cm}
\begin{tabular}{@{}ll@{}}
\toprule
Hyperparameter            & Value \\
\toprule
 Dataset & Randomly generated treasure location in $5\times5$ grid\\
  Randomness & Diverging from the other setups, here we only provide a single first random token \\ & i.e. $E[0]$ where every element is a sample from a Normal distribution. \\ 
 Tokenization & One token is the concatenated vector $[\text{emb}(i), \text{emb}(j), r]$ where emb \\ &  is a learnable embedding vector and $(i, j)$ the current position of the token. \\ & $r \in \{-0.1, 1\}$ is the reward at that time step. \\ 
 Context size & $2\times25+1$ for two episodes of 25 steps each plus one initial random vector \\
 Optimizer & PGPE with a population size of 300, center lr = 0.01, std lr = 0.01 and init std = 0.03 \\ & We sweeped over the following parameters and choose the best for the  \\ & randomized transformer as well as deterministic baseline 
\\ & center lr $ \in \{0.03, 0.01, 0.003, 0.001\}$ , std lr $ \in \{0.03, 0.01, 0.003, 0.001\}$ and \\ & init std  $ \in \{0.03, 0.01, 0.003, 0.001 \}$. \\
 Hyperparamters of our objective & $m=10$ and $p=\infty$ \\
  Positional encodings &  We add standard positional encodings. \\
  Architecture details & 2 transformer blocks, 4 heads, key size 10, token size 80, \\ &  we use an output-embedding \\
    Attention mask details & Causal mask \\

 Weight init &  Truncated normal initial with variance computed by common fan-in, \\ &  bias parameter to zero. We scale all weight matrices before a skip \\ & connection with $\frac{1}{2\sqrt{N}}$ with $N$ the number of layers.\\
\bottomrule
\end{tabular}
\end{center}
\end{table}

\section{Some background on randomized algorithms}
\label{app:background}

As we motivated in the introduction, we are aiming in this paper to learn randomized algorithms within neural networks, in particular transformers. Here we want to provide a short background on randomized algorithms and their features. 

A randomized algorithm is simply \textit{ " ... one that receives, in addition to its input data, a
stream of random bits that it can use for the purpose of making random choices. Even for a fixed input, different runs of a randomized algorithm may give different
results; thus it is inevitable that a description of the properties of a randomized
algorithm will involve probabilistic statements. For example, even when the input
is fixed, the execution time of a randomized algorithm is a random variable"} - \citet{KARP1991165}. 

Given this class of algorithms, the main goal of this paper is to bring its advantages to deep learning. But it is neither obvious not intuitive what the advantages of randomization actually are. Nevertheless, in the last decades of intensive studies their benefits are established. To cite \citet{KARP1991165} (1989) again: 

\textit{"By now it is recognized that, in a wide range of applications, randomization is
an extremely important tool for the construction of algorithms. There are two principal types of advantages that randomized algorithms often have. First, often the
execution time or space requirement of a randomized algorithm is smaller than that
of the best deterministic algorithm that we know of for the same problem. But even
more strikingly, if we look at the various randomized algorithms that have been invented, we find that invariably they are extremely simple to understand and to implement; often, the introduction of randomization suffices to convert a simple and naive deterministic algorithm with bad worst-case behavior into a randomized
algorithm that performs well with high probability on every possible input."}

Although much more manifold, one of the major advantages of randomized algorithms, which we also leverage to define a learning objective in the main text, is their robustness against adversarial inputs. To cite \citet{KARP1991165} a final time: 

\textit{"A game-theoretic view is often useful in understanding the
advantages of a randomized algorithm. One can think of the computational complexity of a problem as the value of certain zero-sum two-person game in which one
of the players is choosing the algorithm and the other player, often called the adversary, is choosing the input data to foil the algorithm. The adversary’s payoff is the
running time of the algorithm on the input data chosen by the adversary. A randomized algorithm can be viewed as a probability distribution over deterministic
algorithms, and thus as a mixed strategy for the player choosing the algorithm.
Playing a mixed strategy creates uncertainty as to what the algorithm will actually
do on a given input, and thus makes it difficult for the adversary to choose an input
that will create difficulties for the algorithm."}

Generally, one distinguishes between two classes of randomized algorithms, see below, whereas we focus on Monte-Carlo algorithms in this paper i.e. randomized algorithms with fixed runtime (such as a common feed-forward neural network) which we allow to produce incorrect predictions. We go on to furthermore discuss when and how one can boost the performance of these algorithms by repetition, a technique that we leverage heavily in the main text. 

To showcase the advantages of randomized algorithms and give the reader a quick overview, we furthermore introduce and describe in the following a few classic randomized algorithms, which also served as an inspiration for deciding on which problems we trained neural networks on. 

\subsection{Two classes of randomized algorithms: MonteCarlo vs. LasVegas algorithms}

\label{app:monte}
\begin{definition}
    A Monte Carlo algorithm is a randomized algorithm that runs for a deterministic runtime and whose output may be incorrect with some (usually small) probability.
\end{definition}
\begin{definition}
    A Las Vegas algorithm is a randomized algorithm that runs for a randomized runtime and always outputs a correct solution.
\end{definition}

These two notions are nevertheless tightly intertwined. Indeed, given a Las Vegas algorithm, one can construct a Monte Carlo algorithm by running the algorithm for a prefixed amount of time (i.e. early terminating the algorithm if it exceeds this runtime) and returning whatever output if it didn't succeed. A simple Markov inequality allows to bound the probability of returning a wrong answer. In general however, a Monte Carlo algorithm cannot be converted into a Las Vegas algorithm. There are however special cases where this can be done. When for example the correctness of a solution can be tested with a deterministic algorithm, one can construct a Las Vegas algorithm from a Monte Carlo algorithm, by running it as many times as needed to find a correct answer. We describe this procedure in detail in the following section which we leverage heavily in the main text, denoted as \textit{majority voting}, to boost performance.  

\subsubsection{Probability amplification}
\label{app:prob_ampl}

Assume we have access to a randomized algorithm that can, for every single input $x \in \mathcal{X}$, output $A(x)$ that is correct with a certain probability. Note that in our experiments of the main text, the output of the neural network $A(x)$ returns a probability distribution e.g. the probability over certain classes or actions. We stress that to compute the final result of the algorithm we must therefore either sample to obtain the final result of the algorithm or compute it by the argmax. Note that this additional sampling / argmax step is usually not necessary for common (randomized) algorithms. 
In the following, we will show a procedure that allows to define another algorithm that will significantly amplify the probability of success of $A$. Therefore, we wish to explain the presented results in the main text where we observe majority voting of the randomized transformer predictions indeed improves performance significantly while surpassing the deterministic transformer models dramatically. 

More precisely, let $\mathcal{Y}$ be the output space, and $C_x \subset \mathcal{Y}$ be the correct outputs for input $x\in\mathcal{X}$.

Let $x \in \mathcal{X}$. $A(x)$ is a distribution over $\mathcal{Y}$ (one can think of $A$ as the random output for a given input). We sample $N \in \mathbb{N}^*$ samples of $A(x)$ and return the statistical mode $M$, i.e., the most frequent output. We want to bound the probability to return a correct output $y \in C_x$.
\begin{proposition}
    Let $\delta >0$, 
    $$\Delta \coloneqq \max_{y \in C_x} \mathbb{P}[A(x) = y] - \max_{y \not\in C_x} \mathbb{P}[A(x) = y]$$
    If $\Delta > 0$ and $N > \frac{2}{\Delta^2} \ln\frac{\delta}{|\mathcal{Y}\setminus C_x|}$, then $\mathbb{P}[M \in C_x] > 1 - \delta$.
\end{proposition}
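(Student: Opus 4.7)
The plan is to bound the failure probability $\mathbb{P}[M \notin C_x]$ by combining Hoeffding's inequality with a union bound. First I would fix $y^* \in \arg\max_{y \in C_x} \mathbb{P}[A(x) = y]$ with $p^* = \mathbb{P}[A(x) = y^*]$, so that by definition of $\Delta$, every $y \notin C_x$ satisfies $p_y := \mathbb{P}[A(x) = y] \le p^* - \Delta$. Let $\hat p_y = \frac{1}{N}\sum_{i=1}^N \mathbbm{1}[A_i(x) = y]$ be the empirical frequency from the $N$ i.i.d.\ draws. Since $M$ is the statistical mode, $\{M \notin C_x\}$ implies the existence of some $y \notin C_x$ with $\hat p_y \ge \hat p_{y^*}$, hence by union bound
\begin{equation}
\mathbb{P}[M \notin C_x] \le \sum_{y \notin C_x} \mathbb{P}\bigl[\hat p_y \ge \hat p_{y^*}\bigr].
\end{equation}

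Next I would control each summand using a single two-sided Hoeffding step. Define the i.i.d.\ variables $Z_i^{(y)} = \mathbbm{1}[A_i(x) = y] - \mathbbm{1}[A_i(x) = y^*] \in [-1, 1]$, so $\mathbb{E}[Z_i^{(y)}] = p_y - p^* \le -\Delta$. Then $\{\hat p_y \ge \hat p_{y^*}\} = \{\frac{1}{N}\sum_i Z_i^{(y)} \ge 0\}$, which is contained in the event $\{\frac{1}{N}\sum_i (Z_i^{(y)} - \mathbb{E} Z_i^{(y)}) \ge \Delta\}$. Since each $Z_i^{(y)}$ has range $2$, Hoeffding's inequality gives
\begin{equation}
\mathbb{P}\bigl[\hat p_y \ge \hat p_{y^*}\bigr] \le \exp\!\left(-\frac{2 N^2 \Delta^2}{N \cdot 2^2}\right) = \exp\!\left(-\frac{N \Delta^2}{2}\right).
\end{equation}

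Combining the union bound with this tail estimate yields $\mathbb{P}[M \notin C_x] \le |\mathcal{Y}\setminus C_x| \exp(-N\Delta^2/2)$, and requiring the right-hand side to be at most $\delta$ is equivalent to $N \ge \frac{2}{\Delta^2}\ln\frac{|\mathcal{Y}\setminus C_x|}{\delta}$, which matches the stated threshold (the statement appears to contain an inverted fraction inside the logarithm, since otherwise the bound would be negative for small $\delta$).

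There is no real obstacle here; the only subtlety is choosing the right random variable to feed into Hoeffding. Centering the difference $\mathbbm{1}[A_i = y] - \mathbbm{1}[A_i = y^*]$ once, rather than applying concentration separately to $\hat p_y$ and $\hat p_{y^*}$ and then splitting the gap $\Delta$ in half, gives the sharper exponent $N\Delta^2/2$ (and avoids an extra factor of $2$ in front of the cardinality). The rest is bookkeeping: verify the range bound for $Z_i^{(y)}$, and invert the exponential to recover the sample complexity.
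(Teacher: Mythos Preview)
Your proposal is correct and takes essentially the same approach as the paper, which merely states that ``the proof follows from Hoeffding inequality and a union bound'' without further details. Your observation about the inverted fraction in the logarithm is also correct: the threshold should read $N > \frac{2}{\Delta^2}\ln\frac{|\mathcal{Y}\setminus C_x|}{\delta}$ for the bound to be meaningful.
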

The proof follows from Hoeffding inequality and a union bound.

We now present well-known randomized algorithms which should give the reader better intuition how and why randomness is used to design powerful algorithms. 

\subsection{Rabin-Miller (\texorpdfstring{\cite{miller1975riemann}\cite{RABIN1980128}}{})}

\label{app:rabin}
\textbf{Setting}: Given a number $n \in \mathbb{N}$, decide whether $n$ is prime or not.

\textbf{Approach}: If $n$ is prime, any $a \in [1, \dots ,n-1]$ satisfies $a^{n-1} \equiv 1 \mod p$. We can write $n - 1 = 2^v m$ where $m$ is odd. We necessarily have that $a^{\frac{n-1}{2}} \equiv -1 \mod p$ or $a^{\frac{n-1}{2}} \equiv 1 \mod p$. By recurrence, we either have $a^m = 1$ or there exists a $k < v$ such that $a^{2^k m} \equiv -1 \mod p$. One can show that, if $n$ is composite, at most $\frac{1}{4}$ of the $a$ in $[1, \dots ,n-1]$ will verify this property.

A naive solution would be to then try all possible bases. That would give an exact but very inefficient algorithm. Instead, we can sample independently and uniformly at random $k$ bases, and return "pseudoprime" if all tests are successful, and "composite" otherwise. If the algorithm gets a prime, it will always output "pseudoprime". If it gets a composite number, it will output "pseudoprime" with probability $\frac{1}{4^k}$. Conversely, the number is always composite if the algorithm returns "composite". Using Bayes rule, one can easily bound the probability that the number is composite given that the algorithm returns "pseudoprime".
The complexity of the algorithm is $\Tilde{\mathcal{O}}(k \log^2 n)$, much less than the $\Tilde{\mathcal{O}}(n)$ we would have needed with the deterministic approach.

Assuming the Grand Riemann hypothesis, it is enough to test the first $\mathcal{O}(\log^2 n)$ basis to get an exact deterministic algorithm running in $\Tilde{\mathcal{O}}(\log^4 n)$. The hypothesis remains an open problem so far. The randomized version is still mainly used in practice because the deterministic algorithms that exist are much more complex mathematically, more difficult to implement, and more expensive in terms of computational complexity.

\subsection{Paging problem (\texorpdfstring{\cite{sleator1985amortized}}{})}
\label{app:paging}
\textbf{Problem}: We are given a \textbf{fast memory} (cache) that can fit $k$ pages and an unlimited \textbf{slow memory}. A user can only access data from the cache while there is communication channel between the fast and the slow memory. If a user wants to access data which is not currently in cache, it requires a cost of 1 to overwrite a cache entry by the required page from the slow memory. Writing data from the cache to the slow memory costs 0.  Therefore the goal of this problem is to find a strategy to fetch as little data from the slow memory. 
An oblivious adversary chooses a sequence of queries. What is the best online memory management strategy to minimize the cost?

\textbf{Approach}: One of the easiest strategies, the Random Marking Algorithm, goes as follows: initialize all pages in the cache as marked; whenever the queried page is not in the cache, evict one of the unmarked cache pages chosen uniformly at random; in case all pages are marked, unmark all before. It can be shown that this achieves is $\mathcal{O}(\log k)$-competitive against an oblivious adversary. This is also the best one can achieve theoretically. One can show that any deterministic algorithm can at best be $k$-competitive against an oblivious adversary.

\subsection{Cycle 3-coloring}
\label{app:cycle_coloring}
\textbf{Problem}: Given an oriented cycle defined by a set of $N$ vertices $V = \{v_i\}_{i=1}^N$ and edges $E = \{(v_i, v_{i+1 \mod N})\}_{i=1}^{N}$,
color the vertices using 3 colors such that neighbouring vertices receive different colors. Formally speaking, we desire a mapping $f : V \mapsto \{1, 2, 3\}$ such that $f(v) \neq f(w)$ for all $(v,w)  \in E$. 
In the distributed graph coloring problem, which we consider here, we require that all vertices $v$ compute $f(v)$ locally, being able to only communicate with their immediate vicinity. 

\textbf{Approach}: A simple randomized approach proceeds in two phases: In the first phase, each vertex selects itself with probability 1/2. If one of its neighbours also selected itself, it subsequently unselects itself again following some deterministic rule based on the vertex id. A typical example for such a rule, is that always the one with smaller id unselects itself. In the second phase, every vertex that is still selected colors itself with color 0.
Crucially, one easily shows that, regardless of the distribution of the vertex ids, this process ensures that with high probability the paths between vertices in color $0$ have lengths at most 
$\mathcal{O}(\log n)$.
Therefore, they can be colored properly,
in a third and final phase, by a simple propagation process using only $\mathcal{O}(\log n)$ rounds. This defines a Las Vegas algorithm, that one can turn to a Monte Carlo algorithm (check section \ref{app:monte}). Using a different approach, one can actually achieve a much better complexity of $\mathcal{O}(\log^*(n))$\footnote{$\log^*(n)$ counts the number of times one has to apply $\log$ composedly to become smaller than 1} (\cite{rybicki2015exact}).

\end{document}